\documentclass[sigconf]{acmart}
\setkeys{acmart.cls}{balance=false}

\acmYear{2026}
\acmConference[DAI '26]
  {The Eighth International Conference on Distributed Artificial Intelligence}
  {November 29--December 2, 2026}
  {Hong Kong, China}
\acmBooktitle{The Eighth International Conference on Distributed Artificial
  Intelligence (DAI '26), November 29--December 2, 2026, Hong Kong, China}

\setcopyright{none}
\copyrightyear{}
\acmDOI{}
\acmISBN{}

\usepackage{array}
\usepackage{booktabs}
\usepackage{float}
\newcommand{\para}[1]{\par\noindent\textbf{#1.}}
\newtheorem{proposition}{Proposition}
\hypersetup{pdfauthor={Lifei Liu; Haoran Yu; Danping Zhang}}

\begin{document}

\title{Revocation Is Not Complete at Receipt: An Executable Outcome Contract for Distributed Agent-Tool Gateways}

\author{Lifei Liu}
\affiliation{%
  \institution{Independent Researcher}
  \city{Seattle}
  \state{WA}
  \country{USA}}
\email{lliu.lifei@gmail.com}

\author{Haoran Yu}
\affiliation{%
  \institution{Independent Researcher}
  \city{Seattle}
  \state{WA}
  \country{USA}}
\email{haoranyu889@gmail.com}

\author{Danping Zhang}
\affiliation{%
  \institution{Nanchang Hangkong University}
  \city{Nanchang}
  \country{China}}
\email{zhangdanping@nchu.edu.cn}

\renewcommand{\shortauthors}{Liu et al.}

\begin{abstract}
Distributed agent workflows cache delegated authority at tool gateways.
Revocation may be reported at issuer commit, transport acknowledgment, or
receiver persistence, although none of these observations establishes that all
gateways enforced the change or that guarded effects stopped.

We define C0--C5, a bounded outcome contract spanning issuer commit through
effect prevention, and implement it as a persistent monitor over a
frozen gateway set, call set, and observation window. A 14-case fault matrix
checks conservative classification, sticky violations, restart persistence,
and evidence immutability. Treating C3 or C4 as completion produces seven or
five false positives in a derived eight-case audit; the bounded C0--C5 verdict
produces none.

Across two separately provisioned 16-host Keycloak executions, normal delivery
and recovery commit no effects, whereas receiver-selective non-delivery admits
and commits 112 of 225 calls (49.78\%) in every treatment block despite
successful transport and receiver persistence. OPA/Toxiproxy, a live networked
MCP workflow, and an unmodified closed-loop MCPMark agent reproduce the
persisted-but-not-enforced boundary on narrower surfaces. In the MCPMark
execution, normal C4 denies all 31 exposed post-barrier calls with no effects,
whereas selective C3 allows 344 calls and commits 145 effects. Transport
success is therefore evidence of progress, not a revocation-completion verdict.
\end{abstract}

\ccsdesc[500]{Security and privacy~Authorization}
\ccsdesc[300]{Computing methodologies~Distributed artificial intelligence}
\keywords{agentic AI, revocation, delegated authority, selective non-delivery,
tool gateways, continuous access evaluation}

\maketitle

\section{Introduction}
\label{sec:intro}

Consider an agent workflow whose delegated authority is cached at eight tool
gateways. A security-event transmitter sends a signed revocation to two durable
receivers, each responsible for four gateways. Both receivers acknowledge and
persist the event, but one receiver cannot forward it across its local control
path. From the transmitter's view, delivery succeeded; from the receiver's
view, the event is durable. The four downstream gateways nevertheless retain
the stale authority and can still commit Filesystem and PostgreSQL effects.
This is the failure isolated by E-B24: in every treatment block, receiver B
returns HTTP 202 and stores the event while its four gateways admit all 112
calls assigned to them. The example exposes the question at the center of this
paper: when may a distributed agent system declare revocation complete?

Prior work provides substantial machinery for moving and recording revocation
state. Agent-authorization proposals specify signed notifications, retries,
status fallback, descendant invalidation, authoritative logs, heartbeat expiry,
gossip, and evidence records
~\cite{singla2026aip,cruz2026aap,kumar2026daap,nelson2026drp,deochake2026hbhc}.
OAuth token revocation already recognizes that some servers may learn an
invalidation later than others~\cite{rfc7009}.
Global Token Revocation distinguishes a successful revocation request from
revocation completion and permits an optional completion notification
~\cite{parecki2026globaltokenrevocation}. Its completion condition concerns
issuer-side token revocation and reauthentication; it does not require receipts
from a distributed gateway population or effect-side observations.
CAEP and the Shared Signals Framework (SSF) define security events and PUSH or
POLL delivery between transmitters and receivers
~\cite{openid2025caep,openid2025ssf}; permit receipts move authorization
evidence closer to an effect boundary~\cite{lee2026permit}. Proof of Execution
separates authorization, enforcement, effects, and records, while Notarized
Agents obtains receiver-attested action receipts
~\cite{rhodes2026proofexecution,figuera2026notarized}. These systems address
complementary objects. Issuer commit, transport acknowledgment, receiver
persistence, gateway enforcement, and effect prevention remain distinct facts.
Evaluating only the earlier stages can hide a stale gateway; evaluating only
denied calls can hide fail-closed availability loss; and a modeled propagation
result does not establish that real tool effects stopped.

We study revocation completion as an end-to-end, bounded outcome rather than as
a property of a particular propagation mechanism. The research question is:
\emph{what evidence is sufficient to certify that a declared set of agent-tool
gateways enforced a revocation and prevented a declared set of effects during
a finite observation window?} We separate the path into C0--C5: issuer commit,
transmitter queueing, transport acknowledgment, receiver persistence, gateway
enforcement, and effect prevention. The bound is explicit. It applies only to
the registered gateways, call vectors, and time window; it does not claim that
an unknown gateway or a future call is safe. This formulation also preserves
the value of C0--C3. They are evidence of progress, but an indistinguishability
argument shows that they cannot, without stronger assumptions, distinguish an
all-enforced execution from one in which a guarded effect commits.

Three obstacles make this outcome difficult to evaluate. First, evidence is
split across administrative and execution boundaries, so a completion decision
must join receiver records, gateway receipts, call decisions, and independent
effect observations without allowing later evidence to erase a violation.
Second, a causal test must interrupt only the receiver-to-gateway handoff while
leaving issuer-to-receiver delivery, data-plane calls, effect services, clocks,
and observation paths intact. Otherwise a failed effect could reflect general
outage rather than revocation enforcement. Third, a result obtained from one
self-authored gateway or one synthetic trace may be implementation-specific.
The evaluation must therefore change both the enforcement stack and the agent
execution surface while retaining unfavorable mechanism-cost and workload
results as claim boundaries.

We address these obstacles with an executable contract and a staged evidence
ladder. A persistent C0--C5 monitor freezes the gateway set, call set, and
observation window, rejects conflicting evidence, and returns bounded complete,
partial, indeterminate, or violated verdicts; E-B27 tests its classification and
integrity behavior. E-B24 then isolates post-persistence non-delivery on two
separately provisioned 16-host deployments and judges outcomes with independent
Filesystem and PostgreSQL oracles. Both 30-block executions yield the same
topology-predicted result: normal delivery and recovery commit no effects,
whereas the fault admits and commits exactly 112 of 225 calls (49.78\%) in every
block. E-B25 changes the enforcement and fault stack to OPA and Toxiproxy, and
E-B26 executes a live three-role LLM workflow over networked MCP tools. E-B28
then removes frozen action replay: an unmodified MCPMark agent observes each
tool result and chooses its next call across the ten frozen E-B20 targets. In
60 valid task arms, selective C3 produces 145 post-barrier effects while normal
C4 produces none. E-B17 and E-B20 retain, respectively, the 50\%
mechanism-cost failure and the failed external-workload hypothesis rather than
converting them into supporting wins.

We make three contributions:
\begin{itemize}
  \item We formulate distributed agent revocation completion as a bounded
        C0--C5 outcome contract and realize it as an executable monitor with
        immutable scope, finite-window closure, sticky violations, and joined
        receipt/effect evidence. E-B27 checks 14 frozen fault cases; a derived
        detector audit exposes the false completions produced by C3- and
        C4-only rules and measures monitor costs through 1,024 gateways and
        10,000 vectors
        (Section~\ref{sec:contract} and Section~\ref{sec:monitor_evaluation}).
  \item We provide a primary multi-host causal measurement and a separately
        provisioned repeatability execution of the persisted-but-not-enforced
        gap. Each uses 16 AWS hosts, eight gateways in three zones, durable
        receivers, an external fault, and two effect oracles. Both measure the
        same 49.78 percentage-point treatment effect and zero-effect recovery
        (Sections~\ref{sec:design}--\ref{sec:results}).
  \item We test whether the boundary depends on one implementation or replay
        surface. E-B25 reproduces it with OPA 1.17 and Toxiproxy 2.12. E-B28
        runs an unmodified external agent in a closed loop over ten frozen
        Filesystem/PostgreSQL tasks: 344 selective post-barrier calls commit
        145 effects, whereas 31 exposed normal calls are all denied. E-B26 is
        a narrower live-workflow sanity check; E-B17's 50\% cost failure and
        E-B20's failed workload hypothesis delimit the claim
        (Sections~\ref{sec:triangulation}--\ref{sec:cross_surface} and
        Appendices~\ref{app:eb17}--\ref{app:supporting}).
\end{itemize}

The paper does not propose a new revocation primitive or attribute the gap to a
Keycloak defect. Its contribution is an evaluation contract and bounded
evidence for a completion gap at the distributed agent execution boundary.

\section{End-to-End Revocation Completion}
\label{sec:contract}

\subsection{Object, Stages, and Verdict}

We represent delegated tool authority as a signed edge
\(e=(i,s,t,\sigma,\tau)\), where issuer \(i\) authorizes subject agent \(s\)
to call tool \(t\) within scope \(\sigma\) until expiry \(\tau\). Let
\(G(e)\) be the set of gateways that may accept the edge. A revocation names
the edge, sequence number, effective time, and issuer. Gateways accept only
valid, nondecreasing state.

The completion contract separates six stages:
\begin{description}
  \item[C0 Issuer committed.] The authoritative issuer records the
        revocation.
  \item[C1 Queued.] The transmitter durably queues a signed event.
  \item[C2 Transport acknowledged.] PUSH returns success or a POLL item is
        acknowledged.
  \item[C3 Persisted.] A receiver verifies and durably stores the event.
  \item[C4 Gateway enforced.] Every assigned gateway durably installs deny
        state and emits a receipt binding the gateway, delegation, event,
        channel, and application time.
  \item[C5 Effect prevented.] For every assigned gateway \(g\), no guarded
        effect attributable to \(g\) commits in the declared post-revocation
        observation window.
\end{description}

For an edge and observation window \(W\), the execution-level verdict is
\[
 \mathrm{Complete}(e,W) = C0(e) \land
 \bigwedge_{g\in G(e)} C4(g,e) \land C5(g,e,W).
\]
C1--C3 record progress in asynchronous delivery but are insufficient for this
verdict. C4 requires evidence from the declared enforcement population, whereas
C5 requires a per-gateway effect-side observation independent of the gateway
response.

\subsection{Observation Lower Bound and Executable Monitor}

\begin{proposition}[C0--C3 indistinguishability]
Under post-persistence selective non-delivery, no completion detector whose
input ends at C3 can soundly decide the C0--C5 outcome contract.
\end{proposition}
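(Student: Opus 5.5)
The argument is an indistinguishability construction: we build two executions that agree on every observation through C3 but receive opposite C0--C5 verdicts. A detector whose input ends at C3 is a function of that prefix, so it cannot answer correctly on both. First we make the input explicit. Let \(\pi_{\le 3}(X)\) be the restriction of an execution \(X\) to issuer commit records, transmitter queue entries, transport acknowledgments, and receiver verification and persistence records, including their timestamps and contents. A detector \(D\) with input ending at C3 is any (possibly randomized) map from \(\pi_{\le 3}(X)\) to a verdict. Soundness means that \(D\) outputs complete only if \(\mathrm{Complete}(e,W)\) holds. Deciding the contract additionally requires that \(D\) output complete on at least one execution where the contract holds. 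Otherwise the trivial detector that never answers complete would be sound, and we would be claiming only that no detector both decides and is sound.

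Next we fix a witness topology: one edge \(e\), a receiver \(r\) that persists the event, and a gateway \(g\in G(e)\) served by \(r\), with window \(W\) and a guarded call by \(s\) to \(t\) inside \(W\). Execution \(X_1\) is the normal run. Here \(r\) forwards the event, \(g\) installs deny state and emits its C4 receipt, the call is denied, and the effect oracle records no commit. Hence \(\mathrm{Complete}(e,W)\) holds. Execution \(X_2\) is identical up to and including the C3 record at \(r\), and is the selective non-delivery fault permitted by the hypothesis. The \(r\to g\) handoff is dropped, so \(g\) retains its cached authority, admits the call, and the oracle records a committed effect attributable to \(g\). Then C5\((g,e,W)\) fails, and C4 fails as well, so \(\mathrm{Complete}(e,W)\) is false.

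The main step is to check that \(\pi_{\le 3}(X_1)=\pi_{\le 3}(X_2)\). The key requirement is that the fault acts strictly after persistence and only on the receiver-to-gateway path. Issuer, transmitter, transport, and receiver storage then behave identically, and asynchrony removes any timing side channel. We argue this as follows. Since no bound on handoff latency is assumed, the prefix of \(X_1\) is compatible with arbitrarily delayed forwarding. We can therefore choose the schedules so that every C0--C3 event has the same timestamp in both runs. This is the step most likely to need care: we must rule out feedback from \(g\) to \(r\), such as receiver-side forwarding acknowledgments, leaking into the C3 record. We handle it by noting that such acknowledgments would be gateway-side evidence, which belongs to C4 by definition. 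A detector reading them does not have input ending at C3.

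We conclude as follows. For deterministic \(D\), equal inputs give equal outputs. If \(D(X_1)\) is complete, then \(D(X_2)\) is complete too, which violates soundness. If instead \(D(X_1)\) is not complete, then \(D\) fails to decide the contract on a complete execution. For randomized \(D\), the same argument applies to the output distribution: any positive probability of answering complete on \(X_1\) is an equal probability of an unsound answer on \(X_2\).
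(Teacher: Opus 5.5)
Your construction is the paper's own argument. It pairs a normal execution, in which every declared gateway installs deny state and the bounded call is denied, with a twin in which one receiver-to-gateway handoff is dropped after persistence, so the stale gateway admits and commits; a detector that sees only the shared C0--C3 prefix must give both the same verdict. The projection \(\pi_{\le 3}\), the exclusion of gateway-to-receiver feedback from the C3 record, and the randomized case are details the paper leaves implicit, and you handle them correctly.

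The step that does not follow as written is your closing case split, measured against the definition of ``decide'' you set up. You define deciding as soundness plus answering complete on \emph{at least one} execution where the contract holds. Yet in the case where \(D(X_1)\) is not complete, you conclude that \(D\) fails to decide. That inference needs deciding to mean correctness on every admissible execution, in particular on \(X_1\), which is the reading the paper's proof uses. Under your nontriviality definition, \(D\) could reject your pre-fixed \(X_1\) and still accept some other complete execution \(X_1'\).

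There are two ways to close this. The first is to drop the nontriviality clause and define deciding as correctness on all admissible executions; your argument is then complete. The second is to make the twin construction universal. For any complete \(X_1'\) that \(D\) accepts, pick any \(g\in G(e)\) and withhold the event from \(g\) after persistence. By your feedback argument, the resulting run has the same \(\pi_{\le 3}\), and it already violates \(C4(g,e)\). So \(\mathrm{Complete}(e,W)\) fails even before any effect commits, contradicting soundness. This second route gives a stronger conclusion than the paper states: every sound detector whose input ends at C3 never certifies completion at all. That requires \(G(e)\neq\emptyset\), which selective non-delivery presupposes; with \(G(e)=\emptyset\) the verdict reduces to \(C0(e)\), which is in the prefix.
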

\begin{proof}
Construct two executions with the same signed event and identical C0--C3
observations. In the first, every declared gateway installs deny state and all
bounded calls are denied. In the second, one receiver-to-gateway handoff fails,
that gateway retains cached authority, and one bounded call commits. A detector
restricted to the common prefix must return the same verdict for both. A
positive verdict is unsound for the second execution, while a negative verdict
cannot certify the first. Distinguishing them requires C4/C5 evidence or a
stronger synchrony or fail-closed assumption.
\end{proof}

We implement the contract in a persistent reference monitor (bounded-contract
version 2). Registration freezes the event, physical-gateway set, expected call
vectors, and finite observation window. The caller supplies nonempty C0--C2
evidence identifiers; the monitor does not independently attest the issuer or
transmitter. It verifies the corresponding persisted receiver event and joins
receiver-side gateway receipts with independently stored allow/deny decisions
and effect-commit observations. Identical duplicate evidence is idempotent,
whereas conflicting receipt or effect records are rejected. The monitor also
rejects attempts to close the window before its frozen deadline or at a future
timestamp.

The monitor returns bounded completion only after every expected gateway has a
receipt, every expected vector has an observation, the window has closed, and
no allowed call or committed effect has been recorded. Otherwise, it reports
partial enforcement, indeterminate effect evidence, or violation. A late
receipt may advance the recorded stage but cannot erase an observed violation.
The verdict does not claim that an unknown gateway or future call can never
produce an effect. Figure~\ref{fig:contract_path} maps this interface to the
E-B24 topology: the normal and fault arms share C0--C3, while the monitor joins
gateway receipts with the two independent effect oracles.

\begin{figure*}[t]
  \centering
  \includegraphics[width=0.98\textwidth]{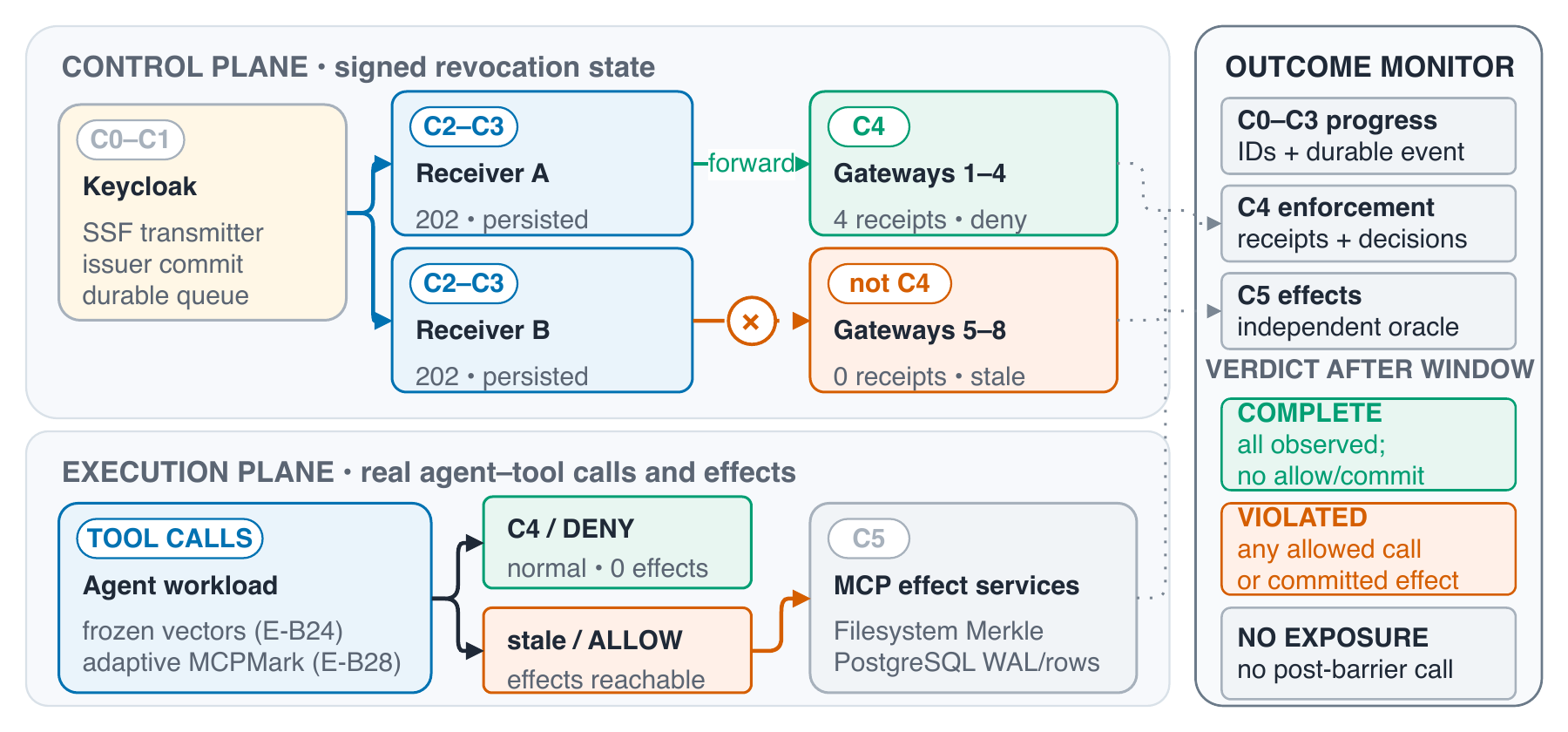}
  \caption{Revocation completion requires outcome evidence beyond delivery.
  Both paths reach C2/C3; only the healthy path reaches C4 and denies agent
  calls, whereas receiver-selective non-delivery (orange cross) leaves stale
  gateways able to reach Filesystem or PostgreSQL. The monitor joins C0--C5
  evidence over a frozen bound; dotted lines carry evidence, not control.
  E-B24 uses frozen vectors, while E-B28 uses adaptive MCPMark calls.}
  \Description{A two-layer architecture separates the signed revocation
  control plane from real agent-tool execution. Keycloak sends an event to two
  durable receivers. Receiver A reaches gateways one through four and installs
  deny state. An orange cross interrupts receiver B's handoff to gateways five
  through eight, which retain stale authority. Agent calls are denied on the
  healthy path but can reach Filesystem and PostgreSQL effect services on the
  stale path. A monitor combines C0 through C5 evidence and returns complete,
  violated, or no-exposure verdicts.}
  \label{fig:contract_path}
\end{figure*}

\subsection{Selective Non-Delivery}

Selective non-delivery makes a valid positive revocation unavailable to a
chosen subset while other recipients and paths continue normally. E-B17
suppresses an operator-owned path to selected gateways but leaves independent
repair reachable. E-B24 tests a narrower execution fault: receiver B accepts
and stores the event, while an external network rule blocks only its handoff to
the gateways it owns. Keycloak-to-receiver delivery, driver-to-gateway calls,
gateway-to-effect traffic, clocks, and the corresponding receiver A paths
remain available.

The adversary cannot forge signatures, alter sequence numbers, modify gateway
or workload code by experimental arm, or suppress all observation. Permanent
isolation remains an impossibility boundary: no asynchronous protocol can
force an unreachable gateway to learn new state. The measurement question is
whether visible C2/C3 progress can conceal that isolation from an observer that
declares completion.

\subsection{Stage Completion and Mechanism Feasibility}

The stage contract answers whether a concrete revocation reached its effect
boundary. Comparing mechanisms requires additional outcomes. E-B17 therefore
judges explicit-event completeness, revoked calls admitted before enforcement,
benign availability loss, independently checkable audit evidence, and control
messages/bytes together. A mechanism is feasible only if every frozen gate is
met. This second layer prevents a low stale-call count from hiding fail-closed
availability loss or an unaffordable repair policy. C0--C5 and the feasibility
outcomes serve different purposes: the former prevents false completion, while
the latter prevents one favorable metric from becoming a mechanism verdict.

\section{Evaluation Design}
\label{sec:design}

\subsection{Evidence Ladder}

Table~\ref{tab:evidence_ladder} assigns a distinct role to each major study. The
causal claim comes from E-B24 and its independently provisioned reproduction.
Other experiments change one evidence surface at a time or bound the
interpretation; we do not pool them as independent samples from a production
population.

\begin{table*}[t]
  \centering
  \caption{Evidence ladder. ``External'' means independently maintained input
  or implementation, not that every adapter or gateway component is external.}
  \label{tab:evidence_ladder}
  \footnotesize
  \setlength{\tabcolsep}{4pt}
  \begin{tabular}{p{0.08\textwidth}p{0.25\textwidth}p{0.39\textwidth}p{0.20\textwidth}}
    \toprule
    Study & Evidence surface & Primary observation & Role in this paper \\
    \midrule
    E-B24 & Primary and separately provisioned repeatability executions;
      16 AWS hosts, 8 gateways,
      3 zones, durable receivers, real effects &
      C2/C3 coexist with missing C4 and 112 committed effects per block; the
      second execution exactly matches the first. &
      Primary causal result and repeatability. \\
    E-B25 & OPA 1.17 policy engines and Toxiproxy 2.12 on one Docker host &
      A post-persistence fault leaves one engine stale: 112/225 effects commit
      and the monitor returns \texttt{VIOLATED}. &
      Cross-implementation replication. \\
    E-B28 & Unmodified external MCPMarkAgent; 10 frozen tasks, 2 repetitions,
      Filesystem and PostgreSQL~\cite{wu2025mcpmark} &
      Normal C4 denies all 31 exposed later calls with zero effects; selective
      C3 allows 344 and commits 145 effects. &
      Closed-loop agent-workload replication. \\
    E-B25/26 & OPA/Toxiproxy and a narrow live multi-role MCP workflow &
      The post-persistence boundary recurs under another policy engine and
      under two model-generated networked actions. &
      Cross-stack execution sanity. \\
    E-B27 & Frozen monitor fault matrix and local SQLite scaling &
      All 14 conservative-classification and integrity rows pass; 33 fresh
      trials span 1,024 gateways and 10,000 effect vectors. &
      Monitor correctness and engineering-cost boundary. \\
    Qualification & E-B17/18/20/22/23: mechanism models, local MCP execution,
      MCPMark traces, and external Keycloak PUSH/POLL~\cite{wu2025mcpmark} &
      The 50\% cost gate and 30\% workload hypothesis fail; real delivery paths
      can reach enforcement when the downstream handoff completes. &
      Cost/workload boundaries and execution sanity checks. \\
    \bottomrule
  \end{tabular}
\end{table*}

\subsection{Frozen Workload and Effect Mapping}

E-B20 runs 30 Filesystem and 21 PostgreSQL MCPMark tasks four times each, using
runs 1--2 for calibration and 3--4 as held out. It retains all 204 task-runs,
including benchmark failures and traces with no mutating calls. The formal
study records 3,436 MCP calls and 921 accepted mutating calls. Its proposed
risk-aware scheduling advantage fails both frozen 30\% outcome gates, so no
risk index, policy, target set, or threshold is retuned.

E-B22--E-B24 inherit the resulting ten target identities and 225 unique
held-out mutating-call vector identities: 112 Filesystem and 113 PostgreSQL.
Raw mutation arguments were not retained. E-B24 therefore performs an
\emph{effect-carrying replay}, not a replay of the original task mutations.
Each frozen identity maps deterministically to one reversible effect: a
canonical file created with flush, atomic rename, and directory
\texttt{fsync}, or a committed row keyed by run, phase, and vector. The gateway
rejects identities outside the frozen allowlist. A remote file manifest and a
PostgreSQL row digest, rather than gateway return values, are the authoritative
oracles.

\subsection{E-B24 Topology and Fault}

E-B24 uses 16 on-demand EC2 instances in one AWS region. Eight physical
gateways are placed 3/3/2 across three availability zones. Keycloak is an
independently maintained open-source identity and access management server.
Separate hosts run Keycloak 26.7.0 and its PostgreSQL database, receiver A,
receiver B, the Filesystem effect service, the PostgreSQL effect
service/database, the workload driver, and an independent fault controller.
Keycloak's experimental
SSF transmitter provides signed CAEP \texttt{session-revoked} SETs over RFC
8935 PUSH. Its implementation queues events durably and also supplies RFC 8936
POLL~\cite{keycloak2026ssf}.

Vector \(i\) is assigned to physical gateway \((i \bmod 8)+1\). Receiver A
owns gateways 1--4 and 113 vectors. Receiver B owns gateways 5--8 and 112
vectors (56 per effect type). The assignment freezes a predicted treatment
boundary of \(112/225=49.78\%\). This fraction is fixed before the outcome run
and is not selected from a pilot.

The fault controller installs a host-level \texttt{nftables} rule on receiver
B that drops only TCP traffic to the eight gateway control endpoints. The rule
does not block Keycloak's PUSH to receiver B, receiver ingress, gateway data
calls, effect traffic, DNS, NTP, or evidence collection. No condition-specific
drop branch exists in receiver, gateway, or workload code.

\subsection{Blocking, Phases, and Outcomes}

The independent unit within an E-B24 execution is a paired cluster block.
Exactly 30 blocks are run with no adaptive stop. Odd blocks execute normal
delivery before the treatment, even blocks reverse that order. Each treatment
is followed by recovery for the same delegation in a fresh effect namespace.
Every phase creates a real Keycloak session, performs OIDC logout, waits for
the frozen gateway pattern, and replays all 225 vectors.

We execute the full 30-block protocol on two separately provisioned 16-host
topologies. Both executions use the same runtime image, freeze manifest, 15/15
order balance, risk index, target membership, policy, and thresholds. The
second execution tests repeatability under the same frozen protocol. We do not
treat the identical, topology-driven counts as 60 stochastic samples from a
production population.

Normal delivery requires HTTP 202, durable receiver state, four receipts per
receiver, and zero admitted calls/effects. In the treatment, receiver A must
reach C4 while receiver B remains at C3 with HTTP 202 and zero receipts.
Exactly receiver B's 112 vectors are expected to commit. Recovery clears the
external fault, waits for receiver B's four receipts, and requires zero effects
in the fresh namespace. All 16 hosts must remain within a frozen 10-ms clock
offset. The primary estimand is the paired difference in stale-admission and
committed-effect fractions. Because the contract predicts exact categorical
counts and the observations have no between-block variation, we report the
effect and replication count rather than a degenerate null-hypothesis test.

\subsection{Cross-Implementation and Live-Agent Replications}

E-B25 preserves the 225 frozen vectors and the post-persistence treatment but
changes the enforcement and fault stack. Two independent OPA 1.17 policy
engines hold the deny state, and Toxiproxy 2.12 disables only the Filesystem
engine's update path after receiver persistence. The data-plane decision path
and both real effect services remain live. Normal, fault, and recovery phases
are judged by the executable C0--C5 monitor. This one-host study tests whether
the boundary depends on our gateway deny implementation; it is not a second
multi-host deployment.

E-B26 changes the execution surface. A live \texttt{gpt-4.1-mini} orchestrator
invokes separate Filesystem and PostgreSQL worker roles, which produce two
networked MCP actions against real effect services. We freeze the resulting
canonical actions before executing active, normal, post-persistence-fault, and
recovery arms, so tool arguments are bit-identical across the paired arms and
model sampling cannot confound the treatment. This is one real agent-tool
workflow, not an independent workload population.

E-B28 removes the frozen-action restriction. It uses the unmodified
MCPMarkAgent at commit \texttt{cd45b7f5}, \texttt{gpt-4.1-mini}, and the exact
ten E-B20 target identities without changing the risk index, policies, or
failed 30\% threshold. Each target runs active, normal C4, and selective C3
arms twice (60 formal task-runs); normal/selective order is balanced. After the
first actual MCP call finishes, but before its result returns to the agent, a
barrier performs a real Keycloak logout and waits for two verified receiver
events. Later normal calls receive gateway denial; later selective calls reach
the official MCP server, so the same agent may retry, switch tools, re-plan, or
stop. Normalized Merkle state and service-side PostgreSQL WAL positions are the
effect oracles. The v2 monitor is constructed over the actual later-call set;
because that set is adaptive, its verdict is an execution check rather than a
claim that the calls were pre-known.

E-B25 and E-B26 use monitor version 1. Their independent effect-oracle records
support the cross-surface comparison, but their C5 labels are not used as
monitor-validation evidence. Version 2 adds immutable evidence and frozen-window
closure. E-B27 freezes its expected verdicts before execution: 14 fresh-state
fault rows cover partial delivery, missing evidence, allow/commit violations,
late receipts, duplicate and scope immutability, restart, window guards, and
missing C0--C2 identifiers. Scaling uses three fresh database trials per cell,
100 assessments after ten warm-ups, gateway counts from 1 to 1,024 at 225
vectors, and vector counts from 1 to 10,000 at eight gateways. No latency SLO
or pass threshold is declared.

\section{Results}
\label{sec:results}

\subsection{E-B24 Reproduces the C2/C3--C4/C5 Gap}

The primary E-B24 execution and the separately provisioned repeatability
execution each pass 30/30 blocks, 90/90 phases, 1,080/1,080 preregistered
assertions, and all clock checks. Table~\ref{tab:eb24_result} reports their
exact match. In each execution, normal delivery denies all 6,750 calls and
produces no authoritative effects. Under selective receiver-to-gateway
non-delivery, receiver B still records HTTP 202 ingress and durable
\texttt{PERSISTED} state in every block, but records no gateway receipt. The
treatment admits 3,360 calls, and every admitted call commits: 1,680 files and
1,680 PostgreSQL rows.

\begin{table*}[t]
  \centering
  \caption{Aggregates from the two separately provisioned E-B24 executions.
  Each execution contains 30 paired blocks and 6,750 calls per phase. Effects
  are counted by independent Filesystem and PostgreSQL oracles; C2--C4 describe
  receiver B. The FS and PG columns give the per-execution decomposition; the
  executions have identical totals.}
  \label{tab:eb24_result}
  \scriptsize
  \begin{tabular}{lrrrrlll}
    \toprule
    Phase & Run 1 allowed/effects & Run 2 allowed/effects & FS & PG & B: C2 & B: C3/C4 \\
    \midrule
    Normal & 0/0 & 0/0 & 0 & 0 & HTTP 202 & persisted/4 receipts \\
    Receiver--gateway fault & 3,360/3,360 & 3,360/3,360 & 1,680 & 1,680 & HTTP 202 & persisted/0 receipts \\
    Recovery & 0/0 & 0/0 & 0 & 0 & retained & retained/4 receipts \\
    \bottomrule
  \end{tabular}
\end{table*}

The paired treatment effect is exactly +49.78 percentage points for both stale
admission and committed effects in every block of both executions. Filesystem
and PostgreSQL each contribute 56 effects per treatment block. Receiver A
remains enforced, so the failure set equals the frozen receiver B assignment
rather than spreading to healthy gateways. The exact failure-set match locates
the result: the observer state \(C2 \land C3\) records the revocation as
delivered and durable while the isolated enforcement partition remains active.

Clearing the fault closes the gap. Receiver B obtains four gateway receipts,
transitions to \texttt{ENFORCED}, and all 225 calls are denied in a fresh
namespace. Both effect oracles return zero in all 60 recovery phases across the
two executions. The normal--fault--recovery sequence rules out an initially
revoked gateway, irreversible corruption, and a workload incapable of producing
effects. The same sequence and totals recur on the separately provisioned
topology. Together, the phases locate the causal boundary after receiver
persistence and before gateway enforcement.

The 49.78\% value should not be read as an incident rate. Half of the physical
gateways are assigned to receiver B and their frozen vectors happen to comprise
112/225 calls. E-B24 measures the consequence conditional on that partition;
it does not estimate how frequently the partition occurs in production.

\subsection{PUSH and POLL Reach Enforcement When the Path Completes}
\label{sec:triangulation}

E-B22 verifies that the observed gap is not caused by an inability to connect
Keycloak to an agent-tool gateway. A real Keycloak logout produces a signed
CAEP \texttt{session-revoked} SET. Under normal delivery, both independent-
process MCP gateways deny all 225 frozen vectors. In the selective condition,
receiver A forwards the SET and its gateway denies 225/225, while receiver B
returns HTTP 202 but withholds its local handoff. Gateway B remains active and
allows 225/225. Keycloak correctly records the acknowledged event as delivered.

E-B23 exercises a second, independently implemented delivery path. Receiver A
uses PUSH while receiver B is configured for native Keycloak RFC 8936 POLL.
Before B polls, A denies and B allows. An authenticated poll then returns one
matching signed SET. After verification and application, both gateways deny all
225 vectors. A second poll acknowledges the event identifier and returns no
event. POLL is therefore a working alternative path into the same deny state.
It is not a fallback for E-B22's already acknowledged PUSH row, which is no
longer pending.

Together, E-B22--E-B24 distinguish a delivery mechanism from a completion
verdict. PUSH and POLL can each reach C4, but transport success alone does not
establish that a receiver-to-gateway handoff occurred. The adapter from a Keycloak
session event to a delegation identifier is self-authored, so these experiments
do not claim Keycloak or gateway-framework conformance beyond the exercised
SSF paths.

\subsection{The Same Boundary Appears Across External Execution Surfaces}
\label{sec:cross_surface}

Table~\ref{tab:replications} reports the two targeted replications. In E-B25,
active OPA policies first allow and commit all 225 effects. Normal delivery then
installs deny state in both policy engines: all 225 calls are denied, no effects
commit, and the stored v1 monitor returns \texttt{COMPLETE\_FOR\_BOUND}. After receiver
persistence, Toxiproxy disables only the Filesystem engine's update path. The
stale engine allows all 112 Filesystem vectors while the PostgreSQL engine
denies its 113 vectors; all 112 allowed calls commit and the monitor returns
\texttt{VIOLATED}. Recovery again denies 225/225 with zero effects. A late
Filesystem receipt can advance stage evidence to C4 but cannot erase the
recorded effect violation.

\begin{table}[t]
  \centering
  \caption{Cross-stack and closed-loop replications. E-B28 denominators include
  only observed post-barrier calls; no-exposure runs are reported separately.}
  \label{tab:replications}
  \scriptsize
  \begin{tabular}{llrrl}
    \toprule
    Study & Arm & Allowed & Effects & Monitor \\
    \midrule
    E-B25 OPA & normal & 0/225 & 0 & v1 complete$^*$ \\
              & fault & 112/225 & 112 & violated \\
              & recovery & 0/225 & 0 & v1 complete$^*$ \\
    E-B26 live MCP & normal & 0/2 & 0 & v1 complete$^*$ \\
                   & fault & 1/2 & 1 & violated \\
                   & recovery & 0/2 & 0 & v1 complete$^*$ \\
    E-B28 closed loop & normal C4 & 0/31 & 0 & 17 complete; 3 none \\
                      & selective C3 & 344/344 & 145 & 18 violated; 2 none \\
                      & recovery C4 & 0/8 & 0 & 2 complete \\
    \bottomrule
  \end{tabular}
  \parbox{0.95\columnwidth}{\scriptsize $^*$The deny/effect records are retained;
  v1 C5 labels are legacy and are not monitor-validation evidence.}
\end{table}

E-B26 first confirms that both frozen actions can produce effects. Under normal
revocation, both networked MCP calls are denied and neither effect commits. In
the post-persistence arm, the Filesystem handoff is missing, its call is
allowed, and the authoritative file effect commits; PostgreSQL is denied. The
monitor returns \texttt{VIOLATED} at C3. Recovery denies both actions, commits
no effects, and receives a stored v1 bounded-completion label. These two studies
show that the observed boundary is not confined to the original gateway deny
code or to identity-only replay. E-B25 remains a one-host engineering
replication, and E-B26 contains only two actions from one model-backed workflow;
neither estimates how often the boundary occurs across deployed agent systems.

E-B28 supplies the broader agent-specific check. All 60/60 formal traces and
2/2 recovery qualifications are valid; 84/84 Keycloak receiver attempts are
authenticated and signature verified. In normal C4, 17/20 runs make 31 later
calls, all are denied, and no effect commits; three runs stop after the first
call and are \texttt{NO\_EXPOSURE}. In selective C3, 18/20 runs make 344 later
calls, all reach the external MCP servers, and 16/20 runs commit 145 effects
(66 Filesystem and 79 PostgreSQL). Two selective runs stop before a later call,
and two make later calls without a state change. Both recovery runs deny all
eight later calls and commit no effect. Thus the boundary survives adaptive
agent feedback on both services, but non-delivery creates an opportunity, not
a guarantee that every sampled trajectory mutates state. Only 4/60 tasks pass
MCPMark's verifier, so this is evidence about live attempt/effect behavior, not
predominantly successful task completion.

\subsection{The Monitor Is Conservative, Persistent, and Measurably Bounded}
\label{sec:monitor_evaluation}

E-B27 evaluates version 2 rather than inferring its correctness from a workload
run. All 14/14 frozen fault rows pass. In particular, C3 without all receipts
is partial, and C4 with a closed window but missing effects is indeterminate.
C5 is reached only after the deadline with all receipts and complete
all-deny/no-commit evidence. Allowed or committed effects are violations; a
late receipt advances stage evidence without clearing that verdict. Identical
duplicates are idempotent; conflicting duplicates and scope changes are
rejected. Reopening both SQLite stores reproduces the verdict byte for byte.

Table~\ref{tab:completion_detector_audit} reports a deterministic, post-hoc
audit against the frozen expected labels for semantic cases F01--F08. A rule
that equates receiver persistence (C3) with completion produces seven false
positives. Requiring all gateway receipts (C4) removes two of them, but still
misclassifies an open observation window, missing effect evidence, and three
effect violations. The bounded C0--C5 verdict has no classification error in
these eight cases. This audit is diagnostic rather than an independent test:
it reuses the registered fault scenarios and excludes the six integrity-only
rows F09--F14.

\begin{table}[t]
  \centering
  \caption{Derived completion-detector audit on E-B27's eight frozen semantic
  cases. Positive means \texttt{COMPLETE\_FOR\_BOUND}.}
  \label{tab:completion_detector_audit}
  \small
  \begin{tabular}{lrrrr}
\toprule
Detector & TP & FP & TN & FN \\
\midrule
C3 persistence as complete & 1 & 7 & 0 & 0 \\
C4 receipts as complete & 1 & 5 & 2 & 0 \\
Bounded C0--C5 monitor & 1 & 0 & 7 & 0 \\
\bottomrule
\end{tabular}

\end{table}

Table~\ref{tab:monitor_scaling} reports descriptive medians across three fresh
database trials per point. The eight-gateway/225-vector point assesses at
1.31\,ms p95. At 1,024 gateways it remains 7.49\,ms p95, but inserting all
1,024 fully synchronized receipts costs 3.81\,s. At 10,000 vectors, assessment
is 25.10\,ms p95 and serial durable effect insertion costs 10.22\,s. These
measurements describe implementation limits rather than a production SLO: v2
favors simple, durable, independently auditable state over batched write
throughput.

\begin{table}[t]
  \centering
  \caption{E-B27 local monitor scaling. Values are medians across three fresh
  stores; assess is p95 over 100 post-warm-up calls.}
  \label{tab:monitor_scaling}
  \scriptsize
  \begin{tabular}{rrr@{\hspace{5pt}}r@{\hspace{5pt}}r}
    \toprule
    Gateways & Vectors & Receipt total & Effect total & Assess p95 \\
             &         & (ms) & (ms) & (ms) \\
    \midrule
    8 & 225 & 3.53 & 68.09 & 1.31 \\
    1,024 & 225 & 3,811.27 & 77.28 & 7.49 \\
    8 & 10,000 & 3.66 & 10,216.50 & 25.10 \\
    \bottomrule
  \end{tabular}
\end{table}

\subsection{Mechanism and Workload Boundaries}

E-B17 tests whether the completion concern disappears when a stronger repair
composition is compared with alternative mechanism families. Ten disclosed
behavioral models share 1,000-gateway traces, 30 paired seeds, and frozen gates
for explicit convergence, stale calls, benign availability, audit evidence,
messages, and bytes. At 20\% selective operator suppression, only push plus
independent repair and matched log polling satisfy every gate. The repair
composition admits 0.877 stale calls per gateway versus 50.549 for log polling,
a 98.27\% reduction, while using 8.001 messages per gateway.

The favorable direction fails the principal cost boundary at 50\% suppression.
Repair admits 1.663 stale calls per gateway but uses 9.501 messages, exceeding
the preregistered 8.1 cap. Matched log polling remains at 6.001 messages. This
failure is retained. It shows that adding repair paths can reduce stale
authority without producing a feasible mechanism at every suppression level.

E-B20 supplies a separate external-validity check and another negative result.
On the frozen MCPMark Filesystem/PostgreSQL attempt-trace population, a
risk-aware allocation derived from E-B19 does not improve either preregistered
outcome by 30\%. Point estimates are slightly unfavorable and small. The
experiment includes all benchmark failures, so it supports neither general
inferiority nor a mechanism contribution. Its role is to prevent the modeled
result from being rewritten as externally reproduced harm reduction.

\section{Implications}
\label{sec:discussion}

\para{Completion requires an enforcement-facing interface}
An issuer API can truthfully report C0, and a transmitter can truthfully report
C1/C2, without knowing C4. A receiver can report C3 without evidence that its
assigned gateways changed state. Systems that expose one of these statuses as
``revocation complete'' collapse distinct distributed facts. A completion API
should instead identify the intended gateway set, collect receipts bound to
the revocation and gateway identity, and state the call population, observation
window, and effect oracle used for C5. The executable monitor provides one
such interface: complete-for-bound, partial-enforcement,
indeterminate-evidence, or violated. Violation evidence is sticky even if a
delayed receipt later arrives. Unlike a transport-success Boolean, these states
preserve the distinction between missing enforcement and missing effect
evidence without implying unbounded future safety. E-B27 also quantifies a
practical cost: durable per-item writes, especially for large receipt sets,
would require batching or a different store before production use.

\para{Agent orchestration needs the completion state}
Revocation is often triggered while an agent task is still live: a user cancels
a delegation, an orchestrator reassigns work, or a risk monitor withdraws tool
authority. A policy that distinguishes these cases needs C4/C5 rather than only
the identity provider's delivery status. High-impact workflows may pause new
tool calls until receipts arrive, route them through a synchronous check, or
terminate affected task branches. The current experiments measure these states
but do not select a universal fail-open or fail-closed policy. E-B28 shows the
operational distinction: after denial, an adaptive agent makes only 31 later
calls across 17 exposed normal runs; under selective non-delivery, it makes 344
allowed calls and commits 145 effects. E-B16 shows why availability makes the
policy choice application dependent.

\para{Repair and acknowledgment are separate design choices}
E-B23 shows that a receiver-initiated POLL stream can deliver the same event
class as PUSH. E-B22 shows why it cannot be assumed to repair a PUSH event that
the transmitter already considers delivered. End-to-end repair therefore
needs a trigger tied to missing enforcement receipts, an independently readable
event source, or a delivery contract in which pending state survives until C4.
Adding a second transport endpoint does not by itself close the semantic gap.

\para{The negative results bound the measurement claim}
E-B17's 50\% cost failure and E-B20's failed risk-aware reproduction do not
support a broader story in which one composed mechanism dominates the field.
The supported principle is narrower: compare mechanisms under a common fault
and cost contract, then verify completion at the gateway and effect boundary.
This interpretation preserves unfavorable cells instead of treating them as
tuning opportunities.

\section{Related Work}
\label{sec:related}

\para{Agent authorization and revocation}
AIP specifies signed notification, retry, degraded-subscriber state, and
status fallback. OAuth-oriented AAP and DAAP address event delivery and
descendant invalidation~\cite{singla2026aip,cruz2026aap,kumar2026daap}. DRP
makes a signed revocation log authoritative and recommends independent
monitoring~\cite{nelson2026drp}. HBHC evaluates heartbeat expiry, gossip,
multipath delivery, and post-revocation tool calls with LLM-backed
swarms~\cite{deochake2026hbhc}. Permit Receipts moves authorization evidence
toward effect boundaries but leaves event distribution to deployment
profiles~\cite{lee2026permit}. These systems motivate the mechanisms we model.
Our object is the cross-stage completion verdict under selective non-delivery.

Recent proposals also combine federation-wide notification, transparency
logs, or enforcement records
~\cite{fane2026opena2aaap,hood2026agtpcert,pidlisnyi2026aps}. This overlap is
why push, polling, gossip, logs, and signed receipts are not claimed as new.
E-B17 is a common-trace behavioral comparison, not a set of source-compatible
reimplementations. Related agent-safety audits show that proxy results need not
license their intended operational interpretation and that configuration can
reverse benchmark verdicts
~\cite{li2026auc0998enoughcandidate,
li2026safetyreproconfigurationconditionalrankinstability}; our frozen contract
applies this measurement caution to revocation.

\para{Continuous security events and revocation distribution}
OAuth token revocation explicitly notes that invalidation can reach some
servers later than others, and that immediate revocation depends on the
resource-server interaction model~\cite{rfc7009}. Our contract turns that
recognized propagation window into a bounded, execution-facing verdict.
Global Token Revocation separately specifies an optional completion
notification after issuer-side token revocation and reauthentication
~\cite{parecki2026globaltokenrevocation}; C0--C5 instead requires receipts
from the declared gateway population and independent observations of bounded
tool effects. CAEP and SSF specify event semantics and PUSH/POLL transport
between a transmitter and receiver~\cite{openid2025caep,openid2025ssf}. Keycloak 26.7
provides the external SSF transmitter used in our execution studies
~\cite{keycloak2026ssf}. Status lists, CRLite, Clubcards, and RevDNS address
compact or scalable revocation distribution
~\cite{w3c2025bitstring,larisch2017crlite,schanck2025clubcards,bhowmick2025revdns},
while Certificate Transparency and CTng address append-only or split-view
evidence~\cite{rfc6962,kong2026ctng}. We do not replace these mechanisms. We
measure the semantic distance from their observable delivery states to an
agent-tool effect boundary.

\para{Agent execution surfaces}
MCP exposes tools to model-backed applications, and A2A represents agents,
tasks, artifacts, and protocol bindings~\cite{mcp2025spec,a2a2026spec}. Neither
base transport specification supplies the complete revocation path studied
here. The recent aiAuthZ gateway demonstrates that off-host, per-call policy
can block agent tool actions, but does not study post-issuance revocation or
completion across a distributed enforcement population
~\cite{kodathala2026aiauthz}. AgentBound combines delegated authorization,
behavioral policies, action contracts, and verifiable action receipts while
preserving revocability, but evaluates per-action governance rather than
cross-gateway revocation completion~\cite{kaul2026agentbound}. Proof of Execution binds authorization, effect,
history, and replay in a runtime-checkable execution object, but evaluates a
single-node prototype rather than revocation propagation across gateways
~\cite{rhodes2026proofexecution}. Notarized Agents moves action evidence to
receiver-signed confidential receipts and a transparency log, while explicitly
retaining receipt suppression as a limitation~\cite{figuera2026notarized}. Our
monitor instead asks whether a frozen enforcement population and effect set
completed after revocation. E-B18 verifies one real local LLM/MCP direction,
E-B20 supplies external MCP workload identities, E-B24 tests a distributed
deployment, E-B25 changes the policy engine, and E-B26 executes live model-
generated networked MCP actions. E-B28 adds the unmodified external MCPMark
agent and natural task loop, but uses one model and one frozen benchmark
commit. These studies do not establish general MCP/A2A interoperability.
E-B27 validates our monitor implementation rather than a third-party
completion-monitor standard.

\section{Limitations}
\label{sec:limitations}

E-B24 estimates the consequence of one frozen partition, not its prevalence.
The 49.78\% effect follows from a topology in which one receiver owns four of
eight gateways and 112 of 225 call vectors. Both executions use one cloud
region, one Keycloak version whose SSF feature is experimental, one
self-authored SET-to-delegation adapter, and one receiver-to-gateway fault. The
effect-carrying vectors preserve external MCPMark identities and service labels
but not the original mutation arguments or natural task semantics.

Implementation breadth is also limited. E-B17 compares ten behavioral models,
not ten production implementations. E-B22/23 exercise external Keycloak
delivery code, but the receivers and gateways are ours. E-B25 adds an
independently maintained policy engine and fault injector, although its adapter
and one-host harness remain ours. E-B26 contains only two frozen actions from
one model and one self-authored workflow. E-B28 uses an unmodified external
agent and natural task loop, but only one model, ten tasks, and two repetitions
on one host. Its task verifier passes 4/60 arms, five comparison arms have no
post-barrier exposure, and two exposed selective arms commit no effect. Its
adaptive call populations therefore establish a bounded execution replication,
not an incident rate or a fixed-trajectory effect ratio. E-B20 likewise has low
task success. These studies do not establish production safety or the
incidence of incomplete revocation across agent platforms.

The completion verdict is bounded to the registered gateways, vectors, and
closed observation window. It cannot exclude future calls, unknown gateways,
or side channels outside that contract. The reference monitor trusts the
supplied C0--C2 evidence identifiers and the integrity of its receiver and
effect stores. E-B27 evaluates a single-host Python/SQLite implementation; its
scaling results do not establish the availability, Byzantine robustness, or
latency of a distributed production monitor. E-B25 and E-B26 used monitor
version 1, so their effect-oracle records support the cross-surface result, but
their bounded-completion labels are not monitor-validation evidence.

\section{Conclusion}
\label{sec:conclusion}

We have shown that a transport acknowledgment or persisted receiver event does
not by itself make revocation complete in a distributed agent system. The
C0--C3 lower bound formalizes this distinction; the executable monitor exposes
it as a bounded systems interface, and E-B27 checks its conservative verdicts,
exposes errors from prefix-only detectors, and measures local scaling limits.
E-B24 reproduces the gap in a primary and a separately provisioned 16-host
execution: selective handoff failure leaves four gateways active and commits
every effect assigned to them, whereas normal delivery and recovery commit
none. OPA/Toxiproxy and a live multi-agent MCP workflow reproduce the boundary
on narrower surfaces. An unmodified external agent further reproduces it in a
closed loop on both Filesystem and PostgreSQL, while also exposing no-effect
and no-exposure repetitions. PUSH, POLL, repair, and logs remain useful
delivery mechanisms, but evaluating revocation completion requires
gateway-enforcement and effect evidence. The contribution is a bounded outcome
contract and the evidence assembled around it.

\bibliographystyle{ACM-Reference-Format}
\bibliography{references}

\clearpage
\appendix

\section{E-B17 Mechanism Disclosure}
\label{app:eb17}

Figure~\ref{fig:eb17} visualizes the principal cost boundary, and
Table~\ref{tab:eb17} gives the underlying confirmatory cells.

\begin{figure*}[t]
  \centering
  \includegraphics[width=0.94\textwidth]{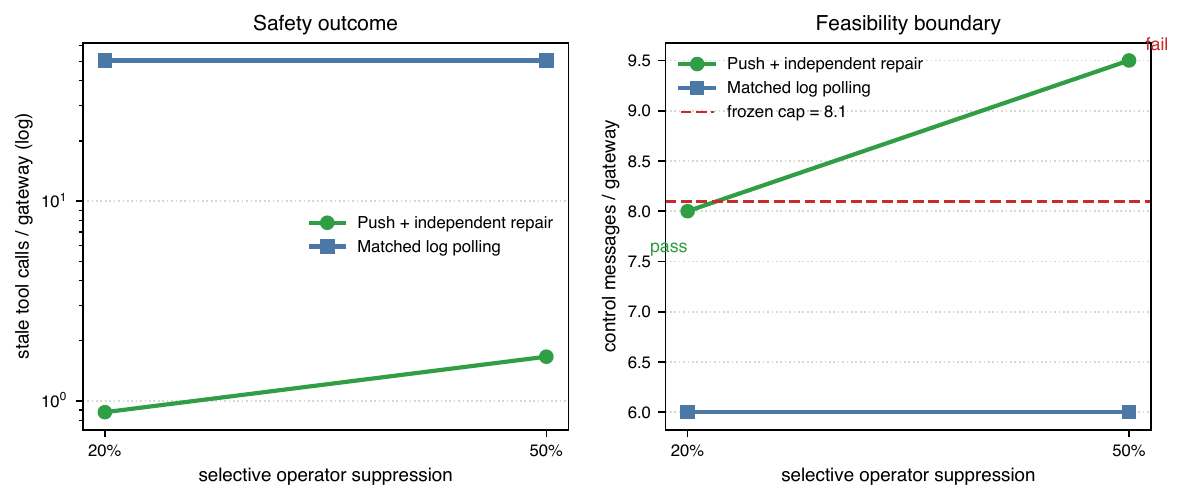}
  \caption{E-B17 mechanism-level boundary. At 20\% selective operator
  suppression, push plus independent repair reduces stale calls and remains
  under the frozen message cap. At 50\%, it crosses the 8.1-message cap.}
  \Description{Two panels compare stale calls and control messages for push
  plus independent repair and matched log polling at 20 and 50 percent
  selective suppression.}
  \label{fig:eb17}
\end{figure*}

\begin{table*}[t]
  \centering
  \caption{Confirmatory E-B17 cells. Availability loss is zero and bytes stay
  below 8,192 for all rows. Feasibility is conjunctive.}
  \label{tab:eb17}
  \small
  \begin{tabular}{llrrrrc}
\toprule
Suppression & Mechanism & Stale calls/gw & Complete & Audit & Msgs/gw & Feasible \\
\midrule
20\% & Push + repair & 0.877 & 1.0 & 1.0 & 8.001 & yes \\
20\% & Matched log polling & 50.549 & 1.0 & 1.0 & 6.001 & yes \\
50\% & Push + repair & 1.663 & 1.0 & 1.0 & 9.501 & no \\
50\% & Matched log polling & 50.549 & 1.0 & 1.0 & 6.001 & yes \\
\bottomrule
\end{tabular}

\end{table*}

The ten rows are AIP-style push/retry/fallback, two named DRP-style log-polling
policies, HBHC-style heartbeat expiry, CAEP/SSF-style streaming, status-list
polling, TTL cache, synchronous fail-open, synchronous fail-closed, and push
plus independent repair. The matched profile imposes common 60-second resource
ceilings. The two DRP policies consequently collapse to the same 10-second
behavior and are one feasible family rather than independent replications.
Table~\ref{tab:model_disclosure} maps every behavioral row to its source and
states the retained abstraction.

\begin{table*}[t]
  \centering
  \caption{Source-to-model disclosure for E-B17. Rows retain selected behavior
  under common traces. They are neither conformance tests nor production
  reimplementations.}
  \label{tab:model_disclosure}
  \footnotesize
  \begin{tabular}{p{0.14\textwidth}p{0.43\textwidth}p{0.34\textwidth}}
\toprule
E-B17 row & Behavior retained in the model & 20\% matched disposition \\
\midrule
AIP -03 & Signed events; 1/2/4-second retries; status fallback~\cite{singla2026aip}. & Fails: completeness 0.8; audit evidence 0. \\
DRP -10/60 & Signed log polling with independently checkable evidence~\cite{nelson2026drp}. & Passes after the disclosed 10-second rescaling. \\
DRP -10/300 & Relaxed polling posture from the same draft~\cite{nelson2026drp}. & Passes, but matches DRP -10/60 and is not an independent replication. \\
HBHC & Published 10/30-second heartbeat with multipath gossip~\cite{deochake2026hbhc}. & Fails: no positive event or audit evidence; 12 messages/gateway exceeds the cap. \\
CAEP/SSF & Signed session-revoked event over a modeled local stream~\cite{openid2025caep,openid2025ssf}. & Fails: completeness 0.8; audit evidence 0. \\
Status list & Signed snapshot polling at the matched 10-second interval. & Fails: completeness 0.8, audit evidence 0, and 750,000 bytes/gateway. \\
TTL cache & Local 60-second credential expiry without a positive event. & Fails: completeness and audit evidence 0. \\
Sync fail-open & Request-time status check that permits on error. & Fails: no positive event or audit evidence; 600 messages/gateway. \\
Sync fail-closed & Request-time status check that denies on error. & Fails: no positive event or audit evidence; 0.2 availability loss; 600 messages/gateway. \\
Push + independent repair & Direct push, bounded peer repair, and a readable log. & Passes: completeness/audit 1.0, zero availability loss, 8.001 messages, and 3,937.2 bytes/gateway. \\
\bottomrule
\end{tabular}

\end{table*}

\section{Supporting Execution Studies}
\label{app:supporting}

\para{E-B18}
Two official MCP Python 1.27.2 gateway servers run as separate local processes.
A local \texttt{llama3.1} model supplies one orchestrator and two worker roles.
All 6/6 runs pass process-isolation, model-output, MCP-call, fault-direction,
and replay gates. Six functional runs do not estimate an LLM population or
multi-vendor behavior.

\para{E-B20}
MCPMark execution covers 204/204 task-runs with 3,436 MCP calls and 921 accepted
mutating calls. Filesystem task success is 4/120 and PostgreSQL success is
24/84. Failed, zero-call, and zero-mutation runs remain in the population. The
paired risk-aware-minus-uniform differences are 0.0875 for the primary harm
metric (95\% interval \([-0.2820,0.5341]\)) and 0.0333 for the unauthorized-
effect count (interval \([0,0.1]\)). Neither preregistered 30\% reduction gate
passes.

\para{E-B22}
Four 225-vector replay groups test normal and selective PUSH delivery. Both
normal gateways deny 225/225. Under selective receiver-local withholding,
gateway A denies 225/225 and gateway B allows 225/225 despite HTTP 202. All 13
registered assertions pass. The guarded tool produces no side effects.

\para{E-B23}
The mixed PUSH/POLL study records 450 replay rows and all 15 registered
assertions pass. The first authenticated POLL request returns one signed SET;
the second acknowledges its identifier and returns none. The measured 140
request-body bytes and 1,374 response-body bytes exclude HTTP/TLS and token-
acquisition overhead.

\para{E-B25}
The independent stack pins OPA 1.17.0 and Toxiproxy 2.12.0 container digests.
An active-state sanity phase allows and commits all 225 frozen vectors. Normal
and recovery each deny 225/225 with zero effects and stored v1
\texttt{COMPLETE\_FOR\_BOUND} labels. The post-persistence fault disables only
the Filesystem OPA update proxy: 112 calls are allowed and commit, 113 are
denied, only the PostgreSQL receipt is present, and the monitor returns
\texttt{VIOLATED}. All 11 registered assertions pass.

\para{E-B26}
The live workflow records exactly three model calls (orchestrator, Filesystem
worker, and PostgreSQL worker) and two networked streamable-HTTP MCP tool
actions. The canonical action arguments are frozen and bit-identical across
arms. Active state commits both real effects; normal and recovery deny both
with no effects; the post-persistence fault allows and commits only the
Filesystem action. All 11 assertions pass.

\para{E-B27}
The version-2 monitor evaluation passes all 14 frozen fault rows and all 33
fresh scaling trials. The matrix covers conservative stage/status
classification, sticky violations, duplicate and scope immutability, window
guards, required evidence identifiers, and exact restart persistence. At the
largest gateway point (1,024 gateways and 225 vectors), median assessment p95
is 7.49\,ms and durable state is 4,824,592 bytes. At the largest vector point
(eight gateways and 10,000 vectors), assessment p95 is 25.10\,ms and durable
state is 6,176,344 bytes. Measurements are descriptive and use no latency pass
threshold.

\section{Secondary Experiments E-B1--E-B16}
\label{app:secondary}

Table~\ref{tab:secondary} groups the remaining component and boundary checks.

\begin{table}[H]
  \centering
  \caption{Secondary checks validate components and boundaries. None replaces
  the confirmatory E-B24 result.}
  \label{tab:secondary}
  \scriptsize
  \setlength{\tabcolsep}{2pt}
  \begin{tabular}{p{0.14\columnwidth}p{0.43\columnwidth}p{0.37\columnwidth}}
    \toprule
    Study & Evidence added & Explicit limit \\
    \midrule
    E-B1--B3 & Suppression boundary, delivery regimes, and mapping from delay
      to stale-authority damage. & Discrete-event models and synthetic rates. \\
    E-B4 & Below-quorum rejection and signed-checkpoint split-view detection. &
      Detection requires eventual cross-observation. \\
    E-B5--B7 & A2A-shaped events, loopback signed transport, and parameter
      sensitivity. & Local serialization/transport, not third-party conformance. \\
    E-B8--B9 & Cached gateway effects and A2A lifecycle-binding cells. &
      Gateway and compatibility models. \\
    E-B10--B12 & Gateway/tool/log TCP paths, envelopes, dropped push, delayed
      log, partition heal, and restart. & Scripted single-host faults. \\
    E-B13--B15 & LangGraph/CrewAI trace bridge, separate processes, official
      MCP typing, and a compiled LangGraph workflow. & Local execution, not
      production interoperability. \\
    E-B16 & Fail-closed, fail-open, TTL-cache, and repair under status outage. &
      Local status service. Policy choice remains application dependent. \\
    \bottomrule
  \end{tabular}
\end{table}

E-B4's prototype uses an M-of-N individual-signature quorum certificate and
an in-process, transparency-inspired log; it tests rejection and
cross-observation only.

\section{Availability, Ethics, and Disclosures}
\label{app:disclosures}

\para{Data and artifact availability}
The artifact contains protocols, frozen manifests, raw and aggregate
outputs, validators, infrastructure definitions, replay scripts, the
executable C0--C5 monitor, and its indistinguishability witness. Versioned
evidence hashes are reported above. The artifact bundle is available at
\url{https://anonymous.4open.science/r/dai26-review-artifact-be90ff8a-9E12/README.md}.
The bundle includes a per-file SHA-256 manifest and an executable verifier.

\para{Ethics}
The experiments use synthetic identities, public benchmark tasks, isolated
cloud resources, and reversible effects. They involve no human participants,
personal data, or production user systems.


\para{AI-tool disclosure}
Generative AI tools assisted with code and manuscript editing. The authors
designed the experiments and verified the code, raw evidence, analyses,
citations, and claims.

\end{document}